\newtheorem{prop}{Proposition}
\begin{document}

\twocolumn[
\icmltitle{Joint-stochastic-approximation Autoencoders with Application to Semi-supervised Learning}
\icmlsetsymbol{equal}{*}

\begin{icmlauthorlist}
\icmlauthor{Wenbo~He}{thuee}
\icmlauthor{Zhijian~Ou}{thuee}
\end{icmlauthorlist}

\icmlaffiliation{thuee}{Department of Electronic Engineering, Tsinghua University, Beijing, China}
\icmlcorrespondingauthor{Zhijian Ou}{ozj@tsinghua.edu.cn}

\vskip 0.3in
]


\printAffiliationsAndNotice{} 

\newcommand{\fix}{\marginpar{FIX}}
\newcommand{\new}{\marginpar{NEW}}



\begin{abstract}
Our examination of existing deep generative models (DGMs), including VAEs and GANs, reveals two problems.
First, their capability in handling discrete observations and latent codes is unsatisfactory, though there are interesting efforts.
Second, both VAEs and GANs optimize some criteria that are indirectly related to the data likelihood.
To address these problems, we formally present Joint-stochastic-approximation (JSA) autoencoders - a new family of algorithms for building deep directed generative models, with application to semi-supervised learning.
The JSA learning algorithm directly maximizes the data log-likelihood and simultaneously minimizes the inclusive KL divergence the between the posteriori and the inference model. 
We provide theoretical results and conduct a series of experiments to show its superiority such as being robust to structure mismatch between encoder and decoder, consistent handling of both discrete and continuous variables.
Particularly we empirically show that JSA autoencoders with discrete latent space achieve comparable performance to other state-of-the-art DGMs with continuous latent space in semi-supervised tasks over the widely adopted datasets - MNIST and SVHN.
To the best of our knowledge, this is the first demonstration that discrete latent variable models are successfully applied in the challenging semi-supervised tasks.

\end{abstract}

\section{Introduction}

Semi-supervised learning (SSL) considers the problem of classification when only a small subset of the observations have corresponding class labels, and aims to leverage the large amount of unlabeled data to boost the classification performance.
Several broad classes of methods for semi-supervised learning include generative models \cite{zhu09}, transductive SVM \cite{Joachims1999TransductiveIF}, co-training \cite{cotraining}, and graph-based methods (see \cite{zhu09} for more introduction).
In recent years, significant progress has been made on representation, learning and inference with Deep Generative Models (DGMs) \cite{dayan1995helmholtz,hinton2006a,Kingma2014SemiSupervisedLW,goodfellow2014generative,li2015generative,xu2016joint}, and this stimulates an explosion of interest in utilizing DGMs for semi-supervised learning.

Semi-supervised learning with DGMs usually involves blending unsupervised learning and supervised learning. 
One justification is that the unsupervised loss (e.g. the negative marginal likelihood over the unlabeled data) provides additional regularization for the supervised loss over the labeled training data \cite{zhu09,erhan2010why,larochelle2012learning}.
Therefore, successful SSL methods often develop or adapt from unsupervised learning methods for DGMs.

DGMs define distributions over a set of variables, consisting of observation variable $x$ and hidden variable (or say latent code) $h$, often organized in multiple layers.
Early forms of DGMs dated back to works on Sigmoid Belief Networks (SBNs) \cite{Saul1996}, Helmholtz machines \cite{dayan1995helmholtz}, and probabilistic autoencoders \cite{zemel1994minimum}. 
In recent years, deep generative modeling techniques has been greatly advanced by inventing new models with new learning algorithms, such as Variational Autoencoders (VAEs) \cite{Kingma2014SemiSupervisedLW}, Generative Adversarial Networks (GANs) \cite{goodfellow2014generative}, auto-regressive neural networks \cite{Larochelle2011TheNA} and so on; all are originally proposed in the context of unsupervised learning.
Two most prominent techniques are VAEs and GANs, both of which have been successfully adapted to semi-supervised learning.
These two also represent two important classes of existing techniques in model representation, learning algorithm, and adaptation to SSL.

VAEs use prescribed generative models \cite{mohamed2016learning} and variational learning, i.e. maximize the variational lower bound of the data log-likelihood w.r.t. both the generative and inference models jointly. 
Adaptation to SSL is straightforward by introducing class label $y$ as another latent variable, in addition to $h$.
Remarkably, continuous hidden variables is in dominant use with the re-parameterization trick - even when the underlying modality is inherently discrete. Using discrete hidden variables in VAEs still remains a challenge, although there are some prior efforts \cite{mnih2014neural,jang2016categorical,maddison2016the,vqvae}.
The application of VAEs to text data has been far less successful, and recently been improved in \cite{vaeccnn}.

GANs use implicit generative models \cite{mohamed2016learning} and adversarial learning, i.e. minimize a lower bound on the Jensen-Shannon (JS) divergence between the generator distribution and
data distribution, along with a discriminator \cite{nowozin2016f-gan}.
Adaptation to SSL either use the $\left(K+1\right)$-class discriminative objective \cite{imporveGAN,badGAN,bayesGAN} or still use $K$-class classifier but with various additional regularization terms \cite{springenberg2016unsupervised, NIPS2017_6997} .
Remarkably, GANs lack the ability to infer the latent variable given the observation, and this limitation has been addressed by some recent studies \cite{dumoulin2016adversarially,Donahue2016AdversarialFL}. 
Learning GANs with discrete hidden variables remains unexplored. 
The application of GANs to discrete data is also rather restricted yet with some efforts \cite{gumbelgan, yu2016seqgan,maligan, bayesGAN}.

The above examination of VAEs and GANs in the context of unsupervised and semi-supervised learning reveals two problems with existing DGM techniques for SSL. 
First, their capability in handling discrete observations and latent codes is unsatisfactory, though there are interesting progresses.
One fundamental reason is the difficulty of back-propagation of gradients through discrete random variables.
Second, although maximum likelihood (ML) has been the de-facto standard for training generative models, both VAEs and GANs optimize some bounds that are indirectly related to the data likelihood.

To address these problems, we note that recently, a new learning algorithm called Joint-stochastic-approximation (JSA) is developed for a broad class of DGMs which are characterized by pairing a generative model (decoder) with an auxiliary inference model (encoder) which approximates the posterior inference in the generative model.
The JSA learning algorithm directly maximizes the data log-likelihood and simultaneously minimizes the inclusive KL divergence the between the posteriori and the inference model. 
We call this new DGM technique by JSA autoencoders, or JAEs for short.
Inspired by the success of unsupervised learning with JAEs in \cite{xu2016joint}, we examine its adaptation to SSL in this paper, which addresses the above two problems with existing DGM techniques for SSL. 

The contributions of this work can be summarized as:

(1) We formally introduce Joint-stochastic-approximation autoencoders (JAE) - a new family of algorithms for building deep directed generative models for semi-supervised tasks, and show its theoretical consistency in the nonparametric limit.
Two distinctive features of JAEs are that they directly optimize the data log-likelihood and provide a simple, consistent and principled way to handle both discrete and continuous variables in latent and observation space.

(2) Synthetic experiments are given to help us analyze JAEs in-depth and understand their behaviors.

(3) We empirically show that JAEs with discrete latent space achieve comparable performance to other state-of-the-art DGMs with continuous latent space in semi-supervised tasks over the widely adopted datasets - MNIST and SVHN.
To the best of our knowledge, this is the first demonstration that discrete latent variable models are successfully applied in the challenging semi-supervised tasks.

\section{Related work}

In this work, we are mainly concerned with semi-supervised learning with deep generative models.
Currently most state-of-the-art SSL methods are based on DGMs.
The main idea is that generative training over unlabeled data provides regularization for finding good classifiers \cite{zhu09}.
From the perspective of regularization, virtual adversarial training (VAT) \cite{Miyato2017VirtualAT} seeks virtually adversarial samples to smooth the output distribution of the classifier, temporal ensembling \cite{laine2016temporal} and mean teacher \cite{tarvainen2017mean} maintain running averages of label predictions and model weights respectively for regularization.
These SSL methods also achieve good results. It can be seen that these SSL methods utilize different regularization from SSL with DGMs. Their combination could yield further performance improvement in practice.

SSL with DGMs often develops or adapts from unsupervised learning methods for DGMs. Recently there have emerged a bundle of DGMs, among which VAEs and GANs represent two important classes.
For fitting a generative model to data, the optimization criterion used has profound effect on the behavior of the fitted model \cite{theis2016a}. Additionally, some auxiliary model are often introduced to facilitate the optimization, e.g. the inference model in variational learning and JSA learning, the discriminator in adversarial learning.
In the following we list a few important optimization criteria (not a complete list) along with the models or learning algorithms which use them. For the sake of clarity, we omit to compare the criteria for optimizing the auxiliary models. Note that JSA learning is distinctive since it directly optimize the data log-likelihood.
\begin{itemize}
\item Maximizing the data log-likelihood, or equivalently minimizing
the Kullback-Leibler (KL) divergence between the data distribution and the generative model, used by JSA \cite{xu2016joint}
\item Maximizing the variational lower bound of the data log-likelihood, or equivalently minimizing the KL divergence between the inference model and the posteriori, used by the wake-sleep algorithm \cite{hinton1995the}, NVIL \cite{mnih2014neural}, VAEs \cite{Kingma2014SemiSupervisedLW};

\item Maximizing the importance sampling (IS) approximated lower bound of the data log-likelihood, used by reweighted wake-sleep (RWS) \cite{bornschein2014reweighted}, importance weighted autoencoders (IWAEs) \cite{burda2015importance};

\item Minimizing the JS divergence between the generator distribution and the data distribution, used by GANs.

\item Minimizing the Wasserstein distance between the generator distribution and the data distribution, used by WGAN \cite{arjovsky2017wasserstein}, WAE \cite{Tolstikhin2017WassersteinA}.
\end{itemize}

While learning under most criteria is provably consistent given infinite model capacity and data, in practice it learns very different kinds of models with different behaviors, for example, VAEs' mode covering behavior, GANs' mode missing behavior.
There are interesting efforts to design new criteria, e.g. connecting the best of GANs and VAEs \cite{makhzani2015adversarial,mescheder2017adversarial,Pu2017SymmetricVA} for better sample generation, but few evaluate the improvements over semi-supervised tasks.
Remarkably, for adaptation of GANs to SSL using the $\left(K+1\right)$-class discriminative objective, it is observed that good semi-supervised classification performance and a good generator cannot be obtained at the same time \cite{imporveGAN}; and it is further analyzed that good semi-supervised learning indeed requires a bad generator \cite{badGAN}.
Nevertheless, among various criteria, maximum likelihood is still appealing due to its nice property (consistency, statistical efficiency, and functional invariance).

For SSL to make up for the lack of labeled training data, good matching of model assumption with the structure of data is critical.
We need to handle different modalities in both observation and latent space.
The most appropriate latent space may be discrete, continuous or even mixed, depending on the structure of data.
However, learning with VAEs and GANs in discrete settings (consisting of three main cases) encounter some difficulties, lagging far behind the progress in continuous settings.

For GANs to work with discrete observations (Case I), it is difficult to propagating gradients back from the discriminator through the generated samples to the generator.
For VAEs to work with discrete latent variables (Case II), there exists basically the same difficulty of back-propagation of gradients through discrete random variables\footnote{Gradient back-propagation through continuous random variables works by using the 'reparameterization trick'. Both VAEs and GANs use this trick in continuous settings to achieve lower variance in the gradients.}.
There have some efforts to address this difficulty.
For variational learning of discrete latent variable models, the REINFORCE-like trick is employed with various variance-reduction techniques in \cite{mnih2014neural,mnih2016variational}; VQ-VAE \cite {vqvae} approximates the gradient with the straight-through estimator \cite{Bengio2013EstimatingOP}; a few studies utilize the Concrete \cite{maddison2016concrete} or Gumbel-softmax \cite{jang2016categorical} distribution as a continuous approximation to a multinomial distribution, and then use 'reparameterization trick', although the gradients of these relaxations are biased.
For learning GANs on discrete sequences, \cite{gumbelgan} resorts to the Gumbel-softmax distribution; \cite{yu2016seqgan, maligan} models the generation of the discrete sequence as a stochastic policy in reinforcement learning and perform gradient policy update.
Remarkably, JAEs do not suffer from such difficulty, since optimizing some expectation w.r.t. discrete variables is solved by stochastic approximation, as we show in the following sections.

The application of VAEs to discrete observations (Case III) has no theoretical difficulty for gradient propagation, but has been far less successful \cite{Bowman2016GeneratingSF,Miao2016NeuralVI}.
It is found that the LSTM decoder in textual VAE does not make effective use of the latent code during training. This training collapse problem may reflect structure mismatch between the encoder and decoder, and is alleviated in \cite{vaeccnn} after controlling the contextual capacity of the decoder by using dilated CNN.
Structure mismatch between the encoder and decoder causes an irreducible biased gap from the data log-likelihood for VAEs. While this mismatch also affects the statistical efficiency for JAEs, JAE learning is still consistent, since the MIS accept/reject mechanism in JAE learning will compensate for the mismatch.

\section{Method}

\subsection{Background}
Our method is an application of the stochastic approximation (SA) framework \cite{SA51}, which basically provides a mathematical framework for stochastically solving a root finding problem, which has the form of expectations being equal to zeros.
Suppose that the objective is to find the solution $\lambda^*$ of $f(\lambda) = 0$ with
\begin{equation}
\label{eq:SA}
f(\lambda) = E_{z \sim p(\cdot; \lambda) } [ F(z;\lambda) ],
\end{equation}
where $\lambda \in R^d$ is a parameter vector of dimension $d$, and $z$ is an observation from a probability distribution $p(\cdot; \lambda)$ depending on $\lambda$, and $F(z;\lambda) \in R^d $ is a function of $z$.
Given some initialization $\lambda^{(0)}$ and $z^{(0)}$, a general SA algorithm iterates as follows.
\begin{enumerate}
\item Generate $z^{(t)} \sim K_{\lambda^{(t-1)}}(z^{(t-1)},\cdot)$, a Markov transition kernel that admits $p(\cdot; \lambda^{(t-1)})$ as the invariant distribution.
\item Set $\lambda^{(t)} = \lambda^{(t-1)} + \gamma_t F(z^{(t)};\lambda^{(t-1)}) $, where $\gamma_t$ is the learning rate.
\end{enumerate}

During each SA iteration, it is possible to generate a set of multiple observations $z$ by performing the Markov transition repeatedly 
and then use the average of the corresponding values of $F(z;\lambda)$ for updating $\lambda$, which is know as SA with multiple moves \cite{Wang2017LearningTR}.
This technique can help reduce the fluctuation due to slow-mixing of Markov transitions. 
The convergence of SA has been studied under various regularity conditions, e.g. satisfying that $\sum_{t=0}^\infty \gamma_t = \infty$ and $\sum_{t=0}^\infty \gamma_t^2 < \infty$. In practice, we can set a large learning rate at the early stage of learning and decrease to $1/t$ for convergence.

\subsection{Joint-stochastic-approximation Learning}

In the following we present the JSA learning algorithm in a more general form, which was originally proposed in \cite{xu2016joint} for learning a broad class of directed generative models.

Consider a generative model\footnote{It is straightforward to develop the algorithm when the priori $p(z)$ depends on $\theta$.} $p_\theta(x,h) \triangleq p(h) p_\theta(x|h)$, consisting of observation variable $x$, hidden variables (or say latent code) $h$, and parameters $\theta$.
It is usually intractable to directly evaluate and maximize the marginal log-likelihood $log p_\theta(x)$, but it is well-known that we have
\begin{displaymath}
\frac{\partial}{\partial\theta} log p_\theta(x) = E_{p_\theta(h|x)}\left[ \frac{\partial}{\partial\theta} logp_\theta(x,h)\right] 
\end{displaymath}

We can pair the generative model $p_\theta(x,h)$ with an auxiliary inference model $q_\phi(h|x)$, parameterized by $\phi$, which approximates the posterior $p_\theta(h|x)$ in the generative model, and jointly train the two models.
This basic idea has been proposed and enhanced many times - initially by Helmholtz machines and recently by NVIL \cite{mnih2014neural}, VAEs \cite{Kingma2014SemiSupervisedLW}, RWS \cite{bornschein2014reweighted}, IWAE \cite{burda2015importance}, and so on.
The distinctive key idea of JSA learning is that 
in addition to maximizing w.r.t. $\theta$ the marginal log-likelihood, it simultaneously minimizes w.r.t. $\phi$ the \emph{inclusive} KL divergence $KL(p_\theta(h|x)||q_\phi(h|x))$ between the posteriori and the inference model, and Fortunately, we can use the SA framework to solve the optimization problem.

Suppose that data $\mathcal{D} = \left\lbrace x_1, \cdots, x_n \right\rbrace $, which consists of $n$ observations drawn from the true but unknown data distribution $p_0(x)$ with support $\mathcal{X} $.
$\tilde{p}(x) \triangleq \frac{1}{n} \sum_{k=1}^{n} \delta(x-x_n)$ denotes the empirical distribution.
Then we can formulate the maximum likelihood learning as jointly optimizing
\begin{equation}
\label{eq:JSA_unsup_obj}
\left\{
\begin{split}
& \min_{\theta} KL\left[ \tilde{p}(x) || p_\theta(x) \right] \\
& \min_{\phi} KL\left[ \tilde{p}(x) p_\theta(h|x)|| \tilde{p}(x) q_\phi(h|x) \right] \\
\end{split}
\right.
\end{equation}
By setting the gradients to zeros, the above optimization problem can be solved by finding the root for the following system of simultaneous equations:
\begin{equation}
\label{eq:JSA_unsup_gradient}
\left\{
\begin{split}
& E_{\tilde{p}(x) p_\theta(h|x)}\left[\frac{\partial}{\partial\theta} logp_\theta(x,h)\right] = 0 \\
& E_{\tilde{p}(x) p_\theta(h|x)}\left[ \frac{\partial}{\partial\phi} logq_\phi(h|x)\right] = 0 \\
\end{split}
\right.
\end{equation}

It can be shown that Eq.(\ref{eq:JSA_unsup_gradient}) exactly follows the form of Eq.(\ref{eq:SA}), so that we can apply the SA algorithm to find its root and thus solve the optimization problem Eq.(\ref{eq:JSA_unsup_obj}). 

\begin{prop}
If Eq.(\ref{eq:JSA_unsup_gradient}) is solvable, then we can apply the SA algorithm to find its root.
\end{prop}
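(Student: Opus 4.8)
The plan is to prove the proposition by casting the system Eq.(\ref{eq:JSA_unsup_gradient}) into the canonical root-finding form Eq.(\ref{eq:SA}) and then invoking the convergence theory of stochastic approximation recalled in the Background. First I would make the identifications: let the SA parameter be the stacked vector $\lambda \triangleq (\theta,\phi) \in R^d$, let the SA observation be $z \triangleq (x,h)$, let the sampling distribution be $p(z;\lambda) \triangleq \tilde p(x)\, p_\theta(h|x)$ (which depends on $\lambda$ only through $\theta$), and let the update function be
\begin{displaymath}
F(z;\lambda) \triangleq \left( \frac{\partial}{\partial\theta}\log p_\theta(x,h),\ \frac{\partial}{\partial\phi}\log q_\phi(h|x) \right).
\end{displaymath}
With these choices $f(\lambda) = E_{z\sim p(\cdot;\lambda)}[F(z;\lambda)]$ is exactly the left-hand side of Eq.(\ref{eq:JSA_unsup_gradient}), so by hypothesis $f$ has a root, and that root is the desired joint solution of Eq.(\ref{eq:JSA_unsup_obj}).

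Second, I would supply the Markov transition kernel required in Step~1 of the generic SA recursion, i.e.\ a kernel admitting $\tilde p(x)\,p_\theta(h|x)$ as invariant distribution. Drawing $x\sim\tilde p(x)$ is immediate (pick a datapoint, or a minibatch); for the intractable conditional $h\sim p_\theta(h|x)$ I would use a Metropolis-independence sampler with the current inference model $q_\phi(h|x)$ as the proposal, accepting a proposed $h'$ over the current $h$ with probability $\min\left\{1,\ \frac{p_\theta(x,h')\,q_\phi(h|x)}{p_\theta(x,h)\,q_\phi(h'|x)}\right\}$, which needs only the unnormalized joint $p_\theta(x,h)$ and is therefore computable. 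This chain leaves $p_\theta(h|x)$ invariant, hence the composed kernel leaves $\tilde p(x)\,p_\theta(h|x)$ invariant, as Step~1 requires; the ``multiple moves'' variant mentioned earlier can be used here to reduce fluctuation from slow mixing.

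Third, with the problem written in canonical SA form and the kernel in hand, convergence of $\lambda^{(t)}$ to the root set follows from the standard SA theorems under the regularity conditions stated in the Background: the Robbins--Monro step sizes with $\sum_t\gamma_t=\infty$ and $\sum_t\gamma_t^2<\infty$; (uniform) ergodicity of the kernel toward $p(\cdot;\lambda)$; smoothness and moment bounds on $F$ and $f$ in $\lambda$; and a stability/Lyapunov condition keeping the iterates bounded. I would check these where possible: continuity of $F$ and $f$ in $\lambda$ from the smoothness of the neural parameterizations, and uniform ergodicity of the independence sampler when the weights $p_\theta(x,h)/q_\phi(h|x)$ are bounded, with boundedness of $\lambda^{(t)}$ enforced by a compact parameter domain (or an explicit projection step).

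The step I expect to be the main obstacle is discharging these regularity conditions, rather than the formal reduction, which is essentially bookkeeping. In particular, a uniform-in-$\lambda$ bound on the importance weights $p_\theta(x,h)/q_\phi(h|x)$ --- needed for uniform ergodicity of the MIS kernel --- is not automatic for arbitrary deep encoders and decoders, and constructing a global Lyapunov function for the coupled $(\theta,\phi)$ recursion is hard because the likelihood surface is non-convex. I would therefore follow the usual practice in the SA literature and state these as assumptions (bounded weights / compact parameter set, existence of a mean-field Lyapunov function), under which the cited SA convergence results apply verbatim; the proposition is then read as the assertion that JSA learning genuinely fits the SA template, with convergence inherited from that framework.
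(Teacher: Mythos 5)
Your first step---recasting Eq.(\ref{eq:JSA_unsup_gradient}) as $f(\lambda)=0$ with $\lambda=(\theta,\phi)$ and $F$ the stacked gradients---is exactly the paper's proof, and your acceptance ratio $\min\{1,\tfrac{p_\theta(x,h')q_\phi(h|x)}{p_\theta(x,h)q_\phi(h'|x)}\}$ correctly reproduces the cancellation of the intractable $p_\theta(x)$ noted in the paper. The difference is in the choice of SA state: you take $z=(x,h)$ with $p(z;\lambda)=\tilde p(x)p_\theta(h|x)$, while the paper takes $z=(k,h_1,\dots,h_n)$ with $p(z;\lambda)=\frac{1}{n}\prod_{k}p_\theta(h_k|x_k)$, i.e.\ it keeps a persistent latent code for \emph{every} training point plus a uniform index, and the MIS move updates only the selected $h_k$. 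This is not cosmetic: it is what makes the kernel required in Step~1 of the SA recursion actually exist.

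The genuine gap is your invariance claim for the composed kernel. If at each iteration you redraw $x\sim\tilde p(x)$ independently and then apply one MIS step to the $h$ carried over from the previous iteration (which was equilibrated against a \emph{different} datapoint), the composed kernel does not leave $\tilde p(x)p_\theta(h|x)$ invariant: starting from the joint, the marginal of the old $h$ is the mixture $\sum_{x'}\tilde p(x')p_\theta(h|x')$, not $p_\theta(\cdot|x)$ for the freshly drawn $x$, and a single MIS step toward $p_\theta(\cdot|x)$ from that mixture does not return the joint. (Keeping $x$ fixed instead would give invariance but destroy ergodicity across datapoints.) The paper's augmented state $(k,h_1,\dots,h_n)$ fixes precisely this: the random-scan update---draw $k$ uniformly, propose $h_k\sim q_\phi(\cdot|x_k)$, accept/reject, leave $h_{j\neq k}$ untouched---is a Metropolis-within-Gibbs move that leaves $\frac{1}{n}\prod_k p_\theta(h_k|x_k)$ invariant while remaining irreducible over the index. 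Your proof would be repaired either by adopting that augmented state (equivalently, caching one latent chain per observation) or by explicitly acknowledging that restart-based sampling only approximates the target, which is a practical matter the paper relegates to the appendix. Your third part (Robbins--Monro step sizes, ergodicity, boundedness/Lyapunov conditions) goes beyond what the paper proves and is a reasonable honest strengthening, but it rests on the kernel being valid, so the fix above is needed first.
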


\begin{proof}
This can be readily shown by recasting Eq.(\ref{eq:JSA_unsup_gradient}) in the form of $f(\lambda) = 0$, with $\lambda \triangleq (\theta, \phi)^T$, $z \triangleq (k,h_1,\cdots,h_n)^T$, $p(z; \lambda) \triangleq \frac{1}{n} \prod_{k=1}^{n} p_\theta(h_k|x_k)$, and 
\begin{displaymath}
F(z; \lambda) \triangleq \left( \begin{array}{c}
\frac{\partial}{\partial\theta} logp_\theta(x_k,h_k) \\
\frac{\partial}{\partial\phi} logq_\phi(h_k|x_k)
\end{array} \right),
\end{displaymath}
where $k$ denotes a uniform index variable over ${1,\cdots\,n}$.
\end{proof}

To apply the SA algorithm, we need to construct a Markov transition kernel $K_{\lambda}(z^{(t-1)},\cdot)$ that admits $p(z; \lambda)$ as the invariant distribution. There are many options. Particularly, we can use the Metropolis independence sampler (MIS), with $p(z; \lambda)$ as the target distribution. However, the proposal $q(z; \lambda)$ is defined by first drawing $k$ uniformly over ${1,\cdots\,n}$, and then only drawing $h_k \sim q_\phi(h_k|x_k)$ without changing other $h_j$ for $j \not= k$. Given current sample $z^{(t-1)}$, MIS works as follow\footnote{We update one $h_k$ at a time so that in $\frac{w(z)}{w(z^{t-1})}$, we can cancel the intractable $p_{\theta}(x_k)$ which is appeared in the importance ratio $w(z)$. In practice, we run SA with multiple moves.}:
\begin{enumerate}

\item Propose $z \sim q(z; \lambda)$,

\item Accept $z^{t}=z$ with probability
\begin{displaymath}
min\left\lbrace 1, \frac{w(z)}{w(z^{t-1})} \right\rbrace, w(z) = \frac{p_\theta(h_k|x_k)}{q_\phi(h_k|x_k)}.
\end{displaymath}
\end{enumerate}

Since the parameters of the target and auxiliary models are jointly optimized based on the SA framework, the above method is referred to as JSA learning. 
It can be seen from the above derivation that JSA learning is general, which places no constrains on the handling of discrete variables for $x$ and $h$.
In the following, we provide more comments and comparisons with existing learning techniques.

First, note that as in JSA iterations, minimizing $KL(\tilde{p}(x) p_\theta(h|x)||\tilde{p}(x) q_\phi(h|x))$ w.r.t. $\phi$ encourages the inference model to chase the posteriori, which subsequently improves the sampling efficiency of using the inference model as the proposal for sampling the posteriori.
Also the inclusive KL divergence ensures that $q_\phi(h|x)>0$ wherever $p_\theta(h|x)>0$, which makes $q_\phi(h|x)$ a valid proposal for sampling $p_\theta(h|x)$.

Second, note that adversarial learning of GANs involves finding a Nash equilibrium to a two-player non-cooperative game. Gradient descent may fail to converge, as analyzed in \cite{imporveGAN}. 
In contrast, Eq.(\ref{eq:JSA_unsup_obj}) in JSA learning is not finding a Nash equilibrium, and thus is more stable.

Third, variational learning is to optimize the variational lower bound (VLB):
\begin{displaymath}
\max_{\theta, \phi} E_{\tilde{p}(x) q_\phi(h|x)} log \left[\frac{p_\theta(x,h)}{q_\phi(h|x)} \right]
\end{displaymath}
While the gradient w.r.t. $\theta$ is well-behaved, the trouble is that the gradient w.r.t. $\phi$ is known to have high variance. To address this problem, there are a lot of efforts, as discussed in Related work.

Fourth, note that JSA learning mainly seeks ML estimates of $\theta$, with an additional optimization over $\phi$. So JSA estimator of $\theta$ enjoys the same theoretical properties as ML estimator, even if $q_\phi$ has finite capacity. Furthermore, if both $p_\theta$ and $q_\phi$ have infinite capacity, we will obtain not only the perfect generative model but also the perfect inference model. The following proposition shows the theoretical consistency of JSA learning in the nonparametric limit.
\begin{prop}
Suppose that $n \to \infty$, and $p_\theta(x,h)$ and $q_\phi(h|x)$ have infinite capacity, then we have
(i) both KL divergences in Eq.(\ref{eq:JSA_unsup_obj}) can be minimized to attain zeros. 
(ii) If both KL divergences in Eq.(\ref{eq:JSA_unsup_obj}) attain zeros at $(\theta^*, \phi^*)$, then we have $p_{\theta^*}(x) = p_0(x)$, $q_{\phi^*}(h|x) = p_{\theta^*}(h|x)$, $x \in \mathcal{X}$.
\end{prop}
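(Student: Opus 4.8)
The plan is to reduce everything to two elementary properties of the Kullback--Leibler divergence: (a) $KL(p\|q)\ge 0$, with equality if and only if $p=q$ almost everywhere (Gibbs' inequality); and (b) if a nonnegative function has zero expectation under a probability measure, then it vanishes almost everywhere with respect to that measure. Combined with the assumptions $n\to\infty$ (so that the empirical distribution $\tilde{p}(x)$ becomes the data distribution $p_0(x)$) and ``infinite capacity'' of $p_\theta(x,h)$ and $q_\phi(h|x)$, both claims follow with no hard analytic step.

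For part (i), I would first use $n\to\infty$ to replace $\tilde{p}(x)$ by $p_0(x)$, so the first objective in Eq.(\ref{eq:JSA_unsup_obj}) becomes $KL\!\left[p_0(x)\,\|\,p_\theta(x)\right]$. Since $p_\theta(x,h)=p(h)p_\theta(x|h)$ has infinite capacity, one can pick $\theta^*$ with $p_{\theta^*}(x|h)=p_0(x)$ for every $h$ (or by any other construction), so that the marginal satisfies $p_{\theta^*}(x)=p_0(x)$ and the first KL equals $0$. Now freeze this $\theta^*$: the second objective equals $E_{\tilde{p}(x)}\!\left[ KL\!\left(p_{\theta^*}(h|x)\,\|\,q_\phi(h|x)\right)\right]$, and by infinite capacity of $q_\phi(h|x)$ there is a $\phi^*$ with $q_{\phi^*}(h|x)=p_{\theta^*}(h|x)$ for every $x\in\mathcal{X}$, which drives the second KL to $0$. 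The observation that makes this a valid \emph{simultaneous} solution is that the first objective depends on $\theta$ alone, so choosing $\phi^*$ afterwards does not disturb it; hence $(\theta^*,\phi^*)$ attains zero in both divergences.

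For part (ii), suppose both divergences vanish at $(\theta^*,\phi^*)$. From $KL\!\left[\tilde{p}(x)\,\|\,p_{\theta^*}(x)\right]=0$ and the equality case of (a), $p_{\theta^*}(x)=\tilde{p}(x)$, i.e.\ $p_{\theta^*}(x)=p_0(x)$ for $x\in\mathcal{X}$ after letting $n\to\infty$. For the inference model, rewrite the second divergence as $E_{\tilde{p}(x)}\!\left[ KL\!\left(p_{\theta^*}(h|x)\,\|\,q_{\phi^*}(h|x)\right)\right]=0$; the integrand is nonnegative, so by (b) it is zero for $\tilde{p}$-almost every $x$, hence $KL\!\left(p_{\theta^*}(h|x)\,\|\,q_{\phi^*}(h|x)\right)=0$ and therefore $q_{\phi^*}(h|x)=p_{\theta^*}(h|x)$ for every $x$ in the support $\mathcal{X}$ of the data, which is exactly the asserted conclusion.

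There is no genuinely hard step here; the care required is in the bookkeeping, and the two points I would be explicit about are: (1) ``infinite capacity'' should be read as \emph{the relevant infima over $\theta$ and over $\phi$ are attained by genuine parameters}, not merely approached, so that the zeros are actually achieved rather than only approximated; and (2) the two optimization problems in Eq.(\ref{eq:JSA_unsup_obj}) are coupled, so one must note that the $\theta$-objective is insensitive to $\phi$ in order to conclude that both zeros are reached at a single point $(\theta^*,\phi^*)$. It is also worth stressing that the conditional identity $q_{\phi^*}(h|x)=p_{\theta^*}(h|x)$ is only forced on $\mathcal{X}$, which matches the statement: off the support $\tilde{p}(x)=0$, so the inclusive KL imposes no constraint there.
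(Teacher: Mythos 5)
Your argument is correct and is essentially the same as the paper's (very terse) proof, which simply invokes the nonnegativity/equality property of the KL divergence together with $\tilde{p}(x) \to p_0(x)$ as $n \to \infty$; you have merely spelled out the construction of $(\theta^*,\phi^*)$ for part (i) and the equality-case argument for part (ii). The extra care you take — attainment of the infima under the infinite-capacity assumption, the $\theta$-objective being independent of $\phi$, and the restriction of the conditional identity to the support $\mathcal{X}$ — is a faithful and correct elaboration of what the paper leaves implicit.
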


\begin{proof}
By the property of the KL divergence, and $\tilde{p}(x) \to p_0(x)$ as $n \to \infty$.
\end{proof}

Finally, note that $p_\theta(x|h)$ is often termed the decoder, and $q_\phi(h|x)$ the encoder. They could be defined either with multiple stochastic hidden layers such as SBNs \cite{Saul1996}, or with multiple deterministic hidden layers such as in VAEs \cite{Kingma2014SemiSupervisedLW}. JSA could be applied in both cases, resulting in JSA autoencoders, or JAEs for short. In this paper, we define them like those in VAEs\footnote{
In this manner, the storage for saving the latent codes per training observation is much reduced, as compared to \cite{xu2016joint}.
}. Next, we examine SSL with JAEs.

\begin{algorithm}[tb]
\caption{Semi-supervised learning with JAEs}
\label{alg:SSL}
\begin{algorithmic}
\REPEAT
\STATE \underline{Monte Carlo sampling:}
Draw a unsupervised minibatch $\mathcal{U} \sim \tilde{p}(x) p_\theta(y,h|x)$ and a supervised minibatch $\mathcal{S} \sim \tilde{p}(x,y) p_\theta(h|x,y)$;
\STATE \underline{SA updating:}
Update $\theta$ by ascending:
$\frac{1}{|\mathcal{U}|} \sum_{(x,y,h) \sim \mathcal{U}}
\frac{\partial}{\partial\theta} logp_\theta(x,y,h) $
$
+\frac{1}{|\mathcal{S}|} \sum_{(x,y,h) \sim \mathcal{S}}
\frac{\partial}{\partial\theta} logp_\theta(x,y,h)
$
Update $\phi$ by ascending:
$
\frac{1}{|\mathcal{U}|} \sum_{(x,y,h) \sim \mathcal{U}}
\frac{\partial}{\partial\phi} logq_\phi(y,h|x) $
$+ \frac{1}{|\mathcal{S}|} \sum_{(x,y,h) \sim \mathcal{S}}
\frac{\partial}{\partial\phi} \left[ logq_\phi(y,h|x)
+ \alpha log q_\phi(y|x) \right] 
$
\UNTIL{convergence}
\end{algorithmic}
\end{algorithm}

\subsection{Semi-supervised Learning with JAEs}

In semi-supervised tasks, we consider the generative model $p_\theta(x,y,h) \triangleq p(y) p(h) p_\theta(x|h)$, where, with abuse of notation, the hidden variables are decomposed to the class label $y$ and the latent code $h$.
The inference model is generally denoted as $q_\phi(y,h|x)$.
Suppose that among the data $\mathcal{D} = \left\lbrace x_1, \cdots, x_n \right\rbrace $, only a small subset of the observations, for example the first $m$ observations, have class labels, $m \ll n$. Denote these labeled data as $\mathcal{L} = \left\lbrace(x_1,y_1), \cdots, (x_m,y_m) \right\rbrace $, with $\tilde{p}(x,y)$ representing the empirical distribution. 
Then we can formulate the semi-supervised learning as jointly optimizing
\begin{equation}
\label{eq:JSA_semi_obj}
\left\{
\begin{split}
& \min_{\theta} KL\left[ \tilde{p}(x) || p_\theta(x) \right] + KL\left[ \tilde{p}(x,y) || p_\theta(x,y) \right]\\
& \min_{\phi} KL\left[ \tilde{p}(x) p_\theta(y,h|x)|| \tilde{p}(x) q_\phi(y,h|x) \right] \\
& + KL\left[ \tilde{p}(x,y) p_\theta(h|x,y)|| \tilde{p}(x,y) q_\phi(h|x,y) \right] \\
& - \alpha \sum_{(x,y) \sim \mathcal{L}} log q_\phi(y|x)
\end{split}
\right.
\end{equation}
which is similar to those SSL criteria used in \cite{larochelle2012learning,Kingma2014SemiSupervisedLW}, which are defined by hybrids of generative and discriminative criteria.
By setting the gradients to zeros, the above optimization problem can be solved by finding the root for the following system of simultaneous equations:
\begin{equation}
\label{eq:JSA_semi_gradient}
\left\{
\begin{split}
& E_{\tilde{p}(x) p_\theta(y,h|x)}\left[\frac{\partial}{\partial\theta} logp_\theta(x,y,h)\right] \\ 
& + E_{\tilde{p}(x,y) p_\theta(h|x,y)}\left[\frac{\partial}{\partial\theta} logp_\theta(x,y,h)\right] = 0 \\
& E_{\tilde{p}(x) p_\theta(y,h|x)}\left[ \frac{\partial}{\partial\phi} logq_\phi(y,h|x)\right] \\
& + E_{\tilde{p}(x,y) p_\theta(h|x,y)}\left[ \frac{\partial}{\partial\phi} logq_\phi(h|x,y)\right] \\
& + \alpha \sum_{(x,y) \sim \mathcal{L}} \frac{\partial}{\partial\phi} log q_\phi(y|x) = 0
\end{split}
\right.
\end{equation}

Similarly, it can be shown that Eq.(\ref{eq:JSA_semi_gradient}) exactly follows the form of Eq.(\ref{eq:SA}), so that we can apply the SA algorithm to find its root and thus solve the optimization problem Eq.(\ref{eq:JSA_semi_obj}). 
Also we can use MIS to draw samples from the posteriori distributions $p_\theta(y,h|x),p_\theta(h|x,y)$, while using the proposals, $q_\phi(y,h|x),p_\phi(h|x,y)$, from the inference model. The SA algorithm with multiple moves for SSL is shown in Algorithm 1.

In SSL experiments in this paper, the inference network is implemented by $q_\phi(y,h|x) \triangleq q_\phi(y|x)q_\phi(h|x)$. This implementation benefits efficient posteriori inference, which is frequently executed in training. This approximation would be safe as the MIS accept/reject mechanism will compensate for the mismatch between the target distribution and the proposal distribution. This is also empirically validated in our experiments.

\section{Experiments}

To evaluate JAEs and compare with other SSL methods (mainly VAEs and GANs), we conduct a series experiments.
We will release code to reproduce our experiments.
Experimental details are provided in Appendix.
It is worthwhile to emphasize that:
\begin{itemize}
\item In addition to giving the SSL results, we also present some unsupervised learning results, because SSL with DGMs usually involves unsupervised learning. These experiments help us to understand different learning behaviors of different SSL methods, instead of competing for unsupervised performance.
\item In addition to showing the benchmarking results over the widely adopted image datasets, experiments with the synthetic datasets are useful for understanding different learning methods since we know the oracle model which creates the data.
\end{itemize} 

\begin{figure}
\vskip 0.2in
\begin{center}
\centerline{\includegraphics[width=\columnwidth]{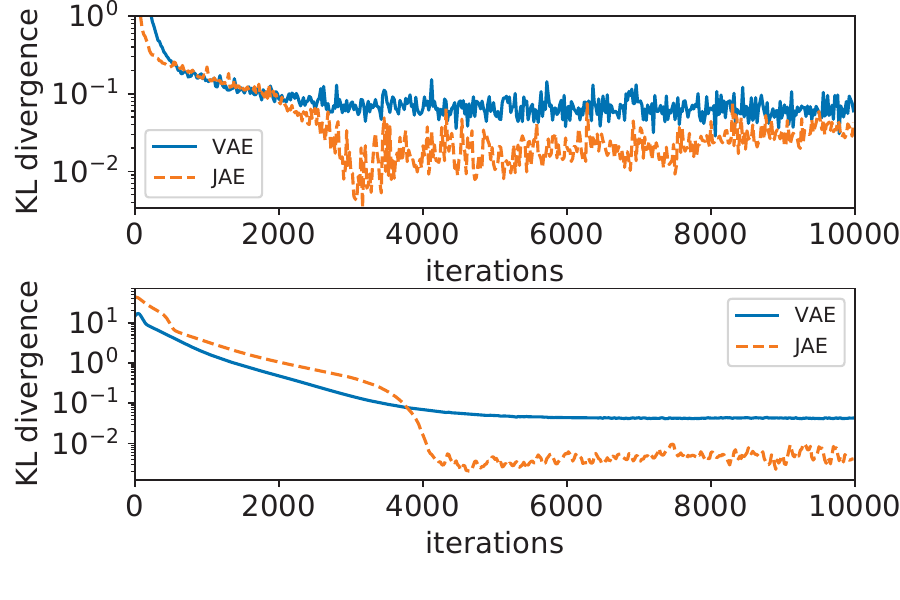}}
\caption{Results for factor analysis. \textbf{Upper:} KL divergences between $p_{\theta}(h|x)$ and $q_{\phi}(h|x)$ during training. \textbf{Lower:} KL divergences between the oracle $p_0(x)$ and the estimated $p_{\theta}(x)$ during training. }
\label{fg:KL}
\end{center}
\vskip -0.2in
\end{figure}

\subsection{Factor Analysis}

It is known that the encoders used in VAE training are usually not expressive enough to capture the true posterior distribution. They are often modeled as diagonal Gaussians whose means and variances are determined by NN transformations of the observations.
Structure mismatch between the encoder and decoder causes an irreducible gap from the data log-likelihood for VAEs. While this mismatch also affects the statistical efficiency for JAEs, JAE learning is still consistent, since the MIS accept/reject mechanism in JAE learning will compensate for the mismatch.

To illustrate this difference, we conduct an experiment over a factor analysis (FA) synthetic dataset. We create 100 3d observations with 2d latent factors as follows:

$
x\sim\mu+Ph+N(0,R), R = 0.04 \times I_3, \\
h \sim N(0,I_2),\\
\mu=(-1,0,1)^T,
P= \begin{pmatrix}

0.2 & 1 &0.5 \\

1 & 0.5 & 0.5

\end{pmatrix}^T,\\
$

We take the generative model $p_{\theta}(x,h)$ to be the above factor analysis model with unknown parameters $\theta = (\mu, P)$, so that the true posteriori $p_{\theta}(h|x)$ can be tractably calculated, which is a 2d Gaussian with correlation coefficient 0.66 in the above setting. Also the oracle $p_0(x,h)$ and assumed $p_{\theta}(x,h)$ are from the same family of models, so that we eliminate the possible bias caused by incorrect model assumption and focus on the effect of structure mismatch between the encoder and decoder.

For both VAE and JAE, the decoder $q_{\phi}(h|x)$ is implemented as a 2d Gaussian with diagonal covariance matrix. The means and variances are the four outputs from a 3-50-4 neural network, fully connected, with ReLU activations at the hidden layer and linear activations at the output layer. 
It can be seen from Figure~\ref{fg:KL} that while there is structure mismatch between the encoder and decoder for both VAE and JAE, this mismatch prevents VAE from learning the data distribution. In contrast, JAE performs much better even with a mismatched decoder.

\begin{figure}
\vskip 0.2in
\begin{center}
\centerline{\includegraphics[width=\columnwidth]{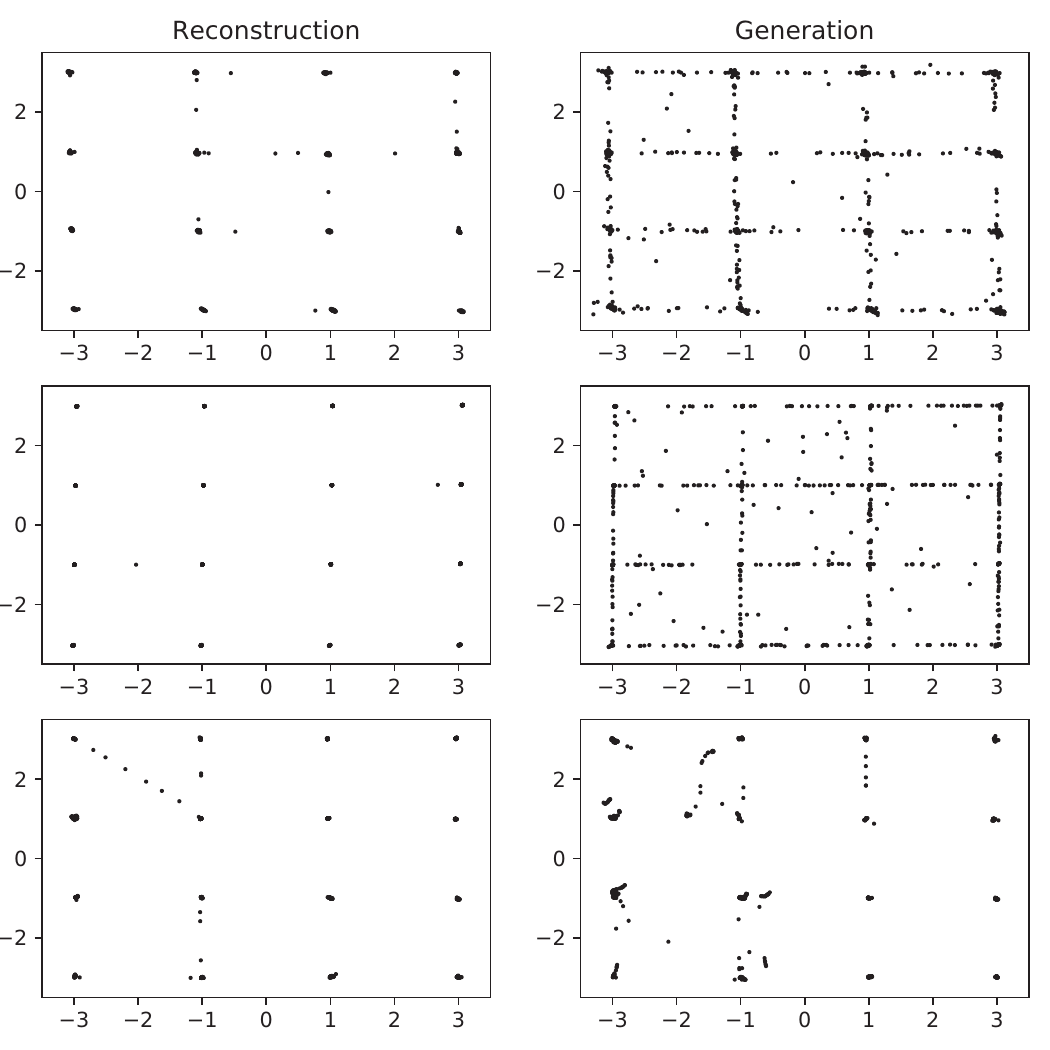}}
\caption{Comparison of a VAE with 2d Gaussian prior for latent code $h$ (row 1), a JAE with 2d Gaussian prior (row 2) and a JAE with a mixture of 4d Bernoulli and 1d Gaussian prior (row 3).}
\label{fg:mixture}
\end{center}
\vskip -0.4in
\end{figure}

\subsection{Gaussian Mixture Model}

This experiment serves two purposes.
First, it help us to understand the reconstruction behaviors of VAEs and JAEs, which also reflect the performances of the inference models.
Second, it evaluates the capabilities of VAEs and JAEs for supporting discrete latent modalities.
Note that for GANs, their reconstruction performance is still worse than autoencoders like VAEs, as evidenced in recent efforts to augment GANs' inference ability \cite{dumoulin2016adversarially}. So we mainly compare VAEs and JAEs in this experiment, and also in the previous FA experiment for the same reason.

As similarly done in \cite{dumoulin2016adversarially}, we create a synthetic dataset, created by randomly drawing 1600 data-points from a 2D Gaussian mixture model (GMM) with 16 equally-weighted, low-variance ($0.05^2$) Gaussian components laid out on a grid
This distribution exhibits lots of modes separated by large low-probability regions, which makes it a decently hard task for learning. It is also a good example of distributions, whose latent space contains both discrete and continuous modalities.

For both VAEs and JAEs, the inference network is implemented by a 3-layer fully connected neural network with 400 units per layer and ReLU activations. The generative network is also a 3-layer fully connected neural network with 200 units per layer and ReLU activations.
We compare a VAE with 2d Gaussian prior for latent code $h$, a JAE also with 2d Gaussian prior, and a JAE with a mixture of 4d Bernoulli and 1d Gaussian prior. The results are summarized in Figure~\ref{fg:mixture}. 

With 2d Gaussian prior, the JAE and VAE perform similarly for such continuous latent space, in terms of reconstruction and sample generation.
However, for this data, the most appropriate latent space is in fact a mixture of discrete and continuous modalities, instead of being purely continuous.
It can be seen that the JAE with the mixed latent code performs the best, which successfully discover not only the 16 discrete modes (represented by 4d Bernoulli) but also the variances surrounding each mode. This demonstrates that JAE can consistently handle both discrete and continuous latent codes.

\begin{figure}
\vskip 0.2in
\begin{center}
\centerline{\includegraphics[width=\columnwidth]{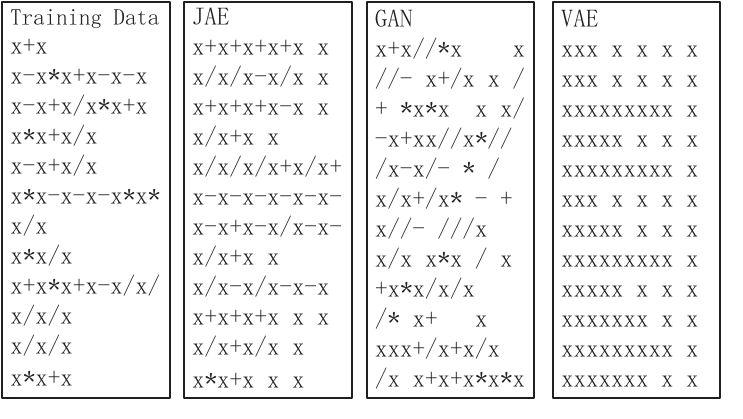}}
\caption{Column 1: Part of the training data from the context free grammar; Column 2/3/4 : data generated by JAE, GAN and VAE respectively. Both GAN and VAE use the Gumbel-softmax trick. The GAN result is copied from \cite{gumbelgan}. The temperature $\{0.1,0.01,0.001\}$ is tested for Gumbel-softmax with VAE. 
}
\label{fg:CFG}
\end{center}
\vskip -0.3in
\end{figure}

\subsection{Sequences of Discrete Elements}

In this experiment, we compare VAEs, GANs and GANs for handling discrete observations.
Particularly, we consider context free grammar (CFG) learning, as done in \cite{gumbelgan}.
The CFG is: $S \rightarrow x||S+S||S-S||S*S||S/S$, which is used to create sequences with a maximum length of 12 characters. 
We pad all sequences with less than 12 characters with spaces.
There are six characters $\{+,-,*,/,x,space\}$. 

We implement VAE and JAE for this learning task. 
Both models use the same RNN-based seq2seq architecture for encoder-decoder.
The decoder is basically a 2-layer (50-6) LSTM, representing $logp_{\theta}(x_{1...T}|h) = \sum_{t=1}^T logp_{\theta}(x_t|x_{t-1},h)$, where $x_t$ is the one-hot encoded character and $x_0=\textbf{0}$.
The prior $p(h)$ is 20d Bernoulli with $\mu=0.5$.
The encoder is basically a 2-layer (50-50) LSTM, representing
$q_{\phi}(h|x_{1...T})$. It calculates the posteriori of $h$ by a single layer neural network form last state of the LSTM. 

The sequence generation results are shown in Figure~\ref{fg:CFG}.
We can see that the generative model of JAE has learned that $x_1$ should be the character 'x', and the character 'x' and other symbols ${+,-,*,/}$ should be generated alternately. 
VAE has learned the importance of 'x' and spaces, but omits other symbols.
This result shows that JAEs are superior in learning with sequences of discrete elements.

\begin{table*}[t]
\caption{Comparison (\% error) between state-of-the-art SSL methods over MNIST using 100 labels and SVHN using 1000 labels, without data augmentation and self-ensembling.}
\vskip 0.15in
\begin{center}
\begin{small}
\begin{sc}
\begin{tabular}{lccc} 
\hline
Methods 									&MNIST 		&SVHN 			&CIFAR-10\\ 
\hline
Ladder network \yrcite{rasmus2015semi-supervised} 	&1.06$\pm$0.37 	&- 				&20.40$\pm$0.47\\
CatGAN \yrcite{springenberg2016unsupervised} 		&1.91$\pm$0.10 	&-  				&19.58$\pm$0.45\\ 
ImprovedGAN \yrcite{imporveGAN}					&0.93$\pm$0.06 	&8.11$\pm$1.3 		&18.63$\pm$2.32\\
ALI \yrcite{dumoulin2016adversarially} 			&- 			&7.42$\pm$0.65  	&17.99$\pm$1.62\\
BadGan \yrcite{badGAN} 						&0.80$\pm$0.01 	&4.25$\pm$0.03 		&14.41$\pm$0.30 \\
TripleGan \yrcite{chongxuan2017triple} 			&0.91$\pm$0.58 	&5.77$\pm$0.17 		&16.99$\pm$0.36\\
\hline
VAE \yrcite{Kingma2014SemiSupervisedLW} 			&3.33$\pm$0.14	&36.02$\pm0.10^*$ 	&-\\
SDGM \yrcite{maaloe2016auxiliary} 				&1.32$\pm$0.07 	&16.61$\pm0.24^*$ 	&-\\ 
ADGM \yrcite{maaloe2016auxiliary} 				&0.96$\pm$0.02	&$22.86^*$		 	&-\\
ST Gumbel-softmax \yrcite{jang2016categorical} 		&6.4 			&-  				&-\\
JAE+Gaussian 								&1.98$\pm$0.07 	&12.80$\pm0.32^*$  	&-\\
JAE+Bernoulli 								&1.96$\pm$0.12 	&6.22$\pm0.55^*$ 	&44.8\\
\hline
\end{tabular}
\end{sc}
\end{small}
\end{center}
\vskip -0.1in
\label{tb:acc-semi}
\end{table*}

\begin{table*}[t]
\caption{Evalution of log-likehood on binary MNIST dataset}
\vskip 0.15in
\begin{center}
\begin{small}
\begin{sc}
\begin{tabular}{lcc} 
\hline
Methods 									&$logpx\ge$ 		&$logp(x)=$ 	\\
\hline
AVB(8-dim)								 	&-83.6$\pm$0.4 		&-91.2$\pm$0.6 	\\
AVB+AC(32-dim) 								&-79.5$\pm$0.3 		&-80.2$\pm$0.4  \\ 
VAE(32-dim)								&-87.2$\pm$0.2 		&-81.9$\pm$0.4 	\\
VAE+NF(T=80)						 		&-85.1			&-  \\
VAE+HVI(T=16)								&-88.3			&-85.5\\			
convVAE+HVI(T=16)							&-84.1			&-81.9\\
VAE+VGP(2hl)								&-81.3			&-\\
DRAW+VGP									&-79.9			&-\\
VAE+IAF									&-80.8			&-79.1\\
Auxiliary VAE(L=2)							&-83.0			&-\\
IWAE(2l,k=50)								&- 				&-82.9\\
RWS										&- 				&-86.8\\
JAE(32-dim)								&-88.0			&-80.3\\
\hline
\end{tabular}
\end{sc}
\end{small}
\end{center}
\vskip -0.1in
\label{tb:acc}
\end{table*}

\begin{figure}
\vskip 0.2in
\begin{center}
\centerline{\includegraphics[width=\columnwidth]{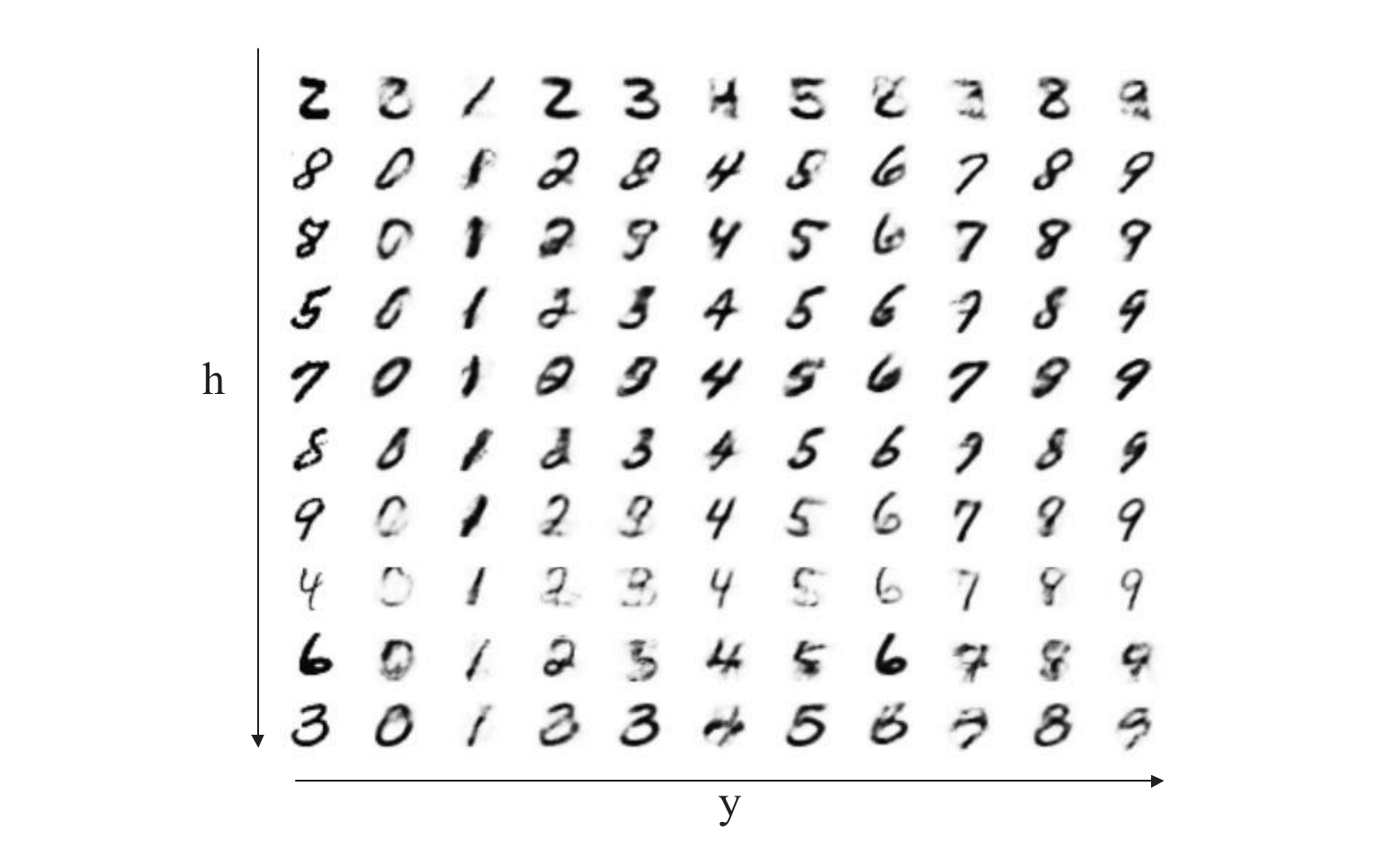}}
\caption{Conditional generation by the semi-supervised JAE over the MNIST dataset, using 60d Bernoulli prior. 
The leftmost column shows images from the test set. 
The other columns are generated by varying class label y for each column, and keeping latent h inferred from the leftmost column. 
}
\label{fg:disentengle}
\end{center}
\vskip -0.3in
\end{figure}

\subsection{Semi-supervised classification benchmarking}
We evaluate SSL performances over the widely adopted image datasets - MNIST and SVHN, and strictly follow the experimental setup in previous benchmarking studies.
MNIST contains 28x28 gray images of ten digits, consisting of 60k training samples and 10k test samples.
SVHN contains 32x32x3 RGB images of ten digits, consisting of 600k training samples (including the extra set) and 26k test samples.
We randomly choose 100 samples for MNIST and 1000 samples for SVHN as labeled data, the others in training set is unlabeled.

To demonstrate the ability of JAEs in handling discrete variables, we employ Bernoulli priors $p(h)$ for both binarized MNIST and SVHN.
The JAE models run 10 times independently for each dataset with randomly selected labeled data.
Table~\ref{tb:acc-semi} compares the JAE results with state-of-the-art SSL methods.
It can be seen that JAEs with discrete latent space achieve comparable SSL performance to other state-of-the-art DGMs with continuous latent space.
When compare with the VAE, which uses Gumbel-softmax to handle categorical class but still uses Gaussian latent code \cite{jang2016categorical}, JAE performs much better on binarized MNIST data.
In addition to superior SSL performance, we also show the ability of semi-supervised JAE to disentangle classes and styles.
These resembles the two forms of analogical reasoning with VAEs \cite{Kingma2014SemiSupervisedLW}, but we show them with discrete latent space.
Figure~\ref{fg:disentengle} shows the results by fixing the latent codes and then varying the class labels.
Figure~\ref{fg:manifold} shows the results by fixing the class label and then varying the latent codes over a range of values.

For SVHN, we find that it is beneficial to pre-process RGB images to gray images, and the error rate is around 6.80\% on RGB images. For SVHN, we additionally utilize the discriminative confidence loss to regularize the JAE's inference network, by minimizing $\sum_{x\sim \tilde{p}(x)} \sum_{y} -q_{\phi}(y|x)logq_{\phi}(y|x)$. 

\begin{figure}
\vskip 0.2in
\begin{center}
\centerline{\includegraphics[width=\columnwidth]{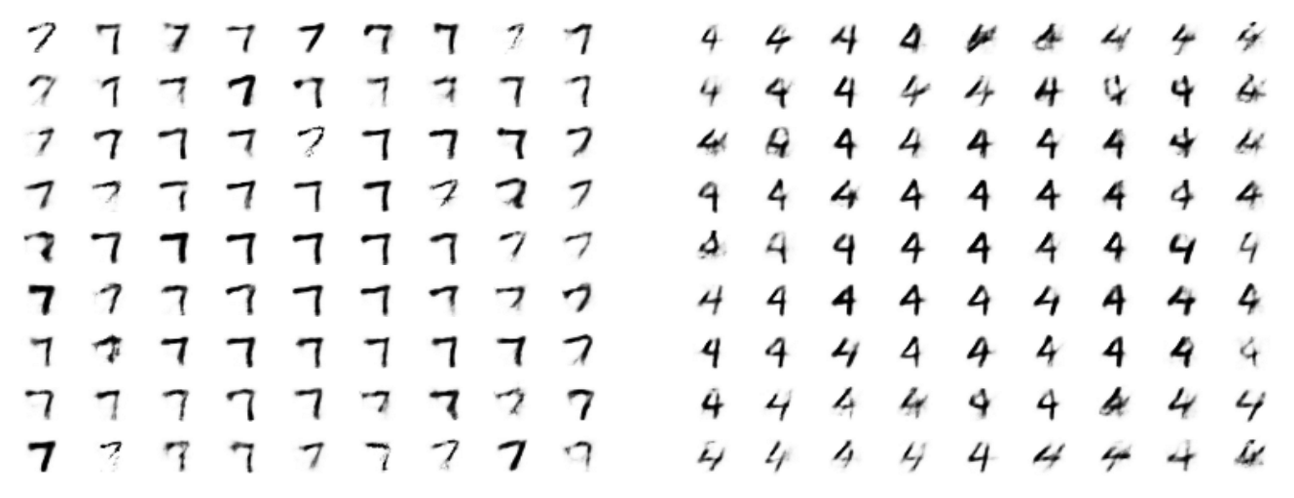}}
\caption{Class-conditional traversal in the discrete latent space. The center images in the two pictures are the reconstructions. Surrounding images are generated with several units of the latent codes flipped randomly. The number of flipped units follows the board distance to the center.}
\label{fg:manifold}
\end{center}
\vskip -0.3in
\end{figure}

\section{Conclusion}

In this paper, we formally present JSA autoencoders - a new family of algorithms for building deep directed generative models, with application to semi-supervised learning.
We provide theoretical results and conduct a series of experiments to show its superiority such as being robust to structure mismatch between encoder and decoder, consistent handling of both discrete and continuous variables.
Particularly we empirically show that JSA autoencoders with discrete latent space achieve comparable performance to other state-of-the-art DGMs with continuous latent space in semi-supervised tasks over the widely adopted datasets - MNIST and SVHN. 
In addition to variational learning and adversarial learning, JSA learning provides another tool in the machine learning toolbox, which deserves more developments.

\bibliography{JAE}
\bibliographystyle{icml2018}

\onecolumn

\appendix
\section{Visualization}

Figure~\ref{fg:svhn_rec} demonstrates the reconstruction result of SVHN data in test set trained by semi-JAE with 220d Bernoulli $p(h)$. 

Figure~\ref{fg:tsne} demonstrates the manifold of MNIST dataset, images are represented as a 60d $p(h|x)$ probabilistic vector trained by unsupervised JAE with Bernoulli prior. It suggests that JAE can extract class features and benefit semi-supervised learning.

\section{Training Details}

\subsection{Factor Analysis Dataset}
For the synthetic factor analysis dataset, we used neural networks for the inference network with one hidden layer, containing 50 rectified linear units. The output of the inference network is a 4d vector, involve 2d for mean values and 2d for variances of the Gaussian $q(h|x)$. The generative network $x=f(h)$ is a factor generate process that $x=\hat{P}h+\mu$. Architectures for JAE and VAE is the same. Parameters are initializer by the Xavier initialization. We optimizes JAE and VAE by Adam($\alpha=10^{-2},\beta_1=0.9,\beta_2=0.999$) with learning rate 0.01. The size of the dataset is 100, and we run 10000 iterations with full batch data on it.

\subsection{Almost Discrete Gaussian Mixture Dataset}
Inference networks and generative networks we used are fully connected neural networks for the almost discrete Gaussian mixture dataset. Inference networks for VAE and JSA are networks with 2 hidden layers with shape 400-400 and the ReLU activation. Generative networks are networks 2 hidden layers with shape 200-200 and the ReLU activation. Parameters are initializer by the Xavier initialization. The variance of Gaussian noise attached to the output of JAE's inference network is selected to be 0.05, and is decreased to 0.01 after 1000 iterations. We use the Adam optimizer ($\alpha=0.05,\beta_1=0.9,\beta_2=0.999$). The size of the dataset is 1600, batch size is 100, and we run 10000 iterations on it.

\subsection{Sequences of Discrete Elements Dataset}
Architecture of the models for the discrete sequence dataset is shown in Figure~\ref{fg:graph_lstm}. Numbers on the picture denotes widths of layers. We show 4 time steps but LSTM networks run 12 time steps for every sequence. Squares denote LSTM layer and the circle denotes the fully connected layer. All layers without annotation explicitly use tanh activation. Parameters are initializer by the Xavier initialization. SGD optimizers with learning rate 0.0005 are used to train networks. The prior $p(h)$ for JAE and VAE is selected to be 20d Bernoulli distribution with $\mu=0.5$. The size of the dataset is 5000, batch size is 100, and we run 10000 iterations on it.

\subsection{MNIST}
We leverage 60d Bernoulli prior of the hidden variables for MNIST. Figure~\ref{fg:graph_mnist} demonstrates networks for MNIST. Dense denotes fully connected layer and noise denotes the addition of gauss ion noise with variance $0.3^2$. All layers without annotation explicitly use ReLU activation. Parameters are initializer by the Xavier initialization. The optimizer is SGD with learning rate 0.001. The MIS process work without cache and restart for every datapoint with 10-step warm up. After 100 epochs, the learning rate decays to 0.995 times it for every epoch. We use 100 randomly selected images as labeled data, and use the other 50k images as unlabeled data. The accuracy is evaluated on the test set with 10k images. The algorithm runs $1.2*10^5$ iterations, and for every iteration parameters are updated by 100 labeled data and 100 unlabeled data. For first 500 iterations, only supervised learning is executed.

\subsection{SVHN}
We leverage 220d Bernoulli prior of the hidden variables for SVHN. Figure~\ref{fg:graph_svhn} demonstrates networks for SVHN. BN denotes batch normalization. Dense denotes fully connected layer. And the keep probability for dropout we used is 0.8. All layers without annotation explicitly use ReLU activation. The variance of the Gaussian noise attached to the output of JAE's inference network is selected to be 0.1. Parameters are initializer by the Xavier initialization and biases are set to be zero. Regulation are added into the criterion. Minimizing $H(p(y|x))$ for unlabeled data and the pseudo loss $KL(\sum_hp(x,y,h)||q(z|x)p(x))$ improves the performance of the network. The optimizer is SGD with learning rate 0.00005. After $10^5$ iterations, the learning rate decays to 0.995 times it for every 100 iterations. The MIS process work without cache and restart for every datapoint with 10-step warm up. We use 1000 randomly selected images as labeled data, and use the other 600k images as unlabeled data. The accuracy is evaluated on the test set with 26k images. The algorithm runs $1.2*10^5$ iterations, and for every iteration parameters are updated by 50 labeled data and 500 unlabeled data. For first 1000 iterations, only supervised learning is executed.

\begin{figure}
\vskip 0.2in
\begin{center}
\centerline{\includegraphics[width=\columnwidth]{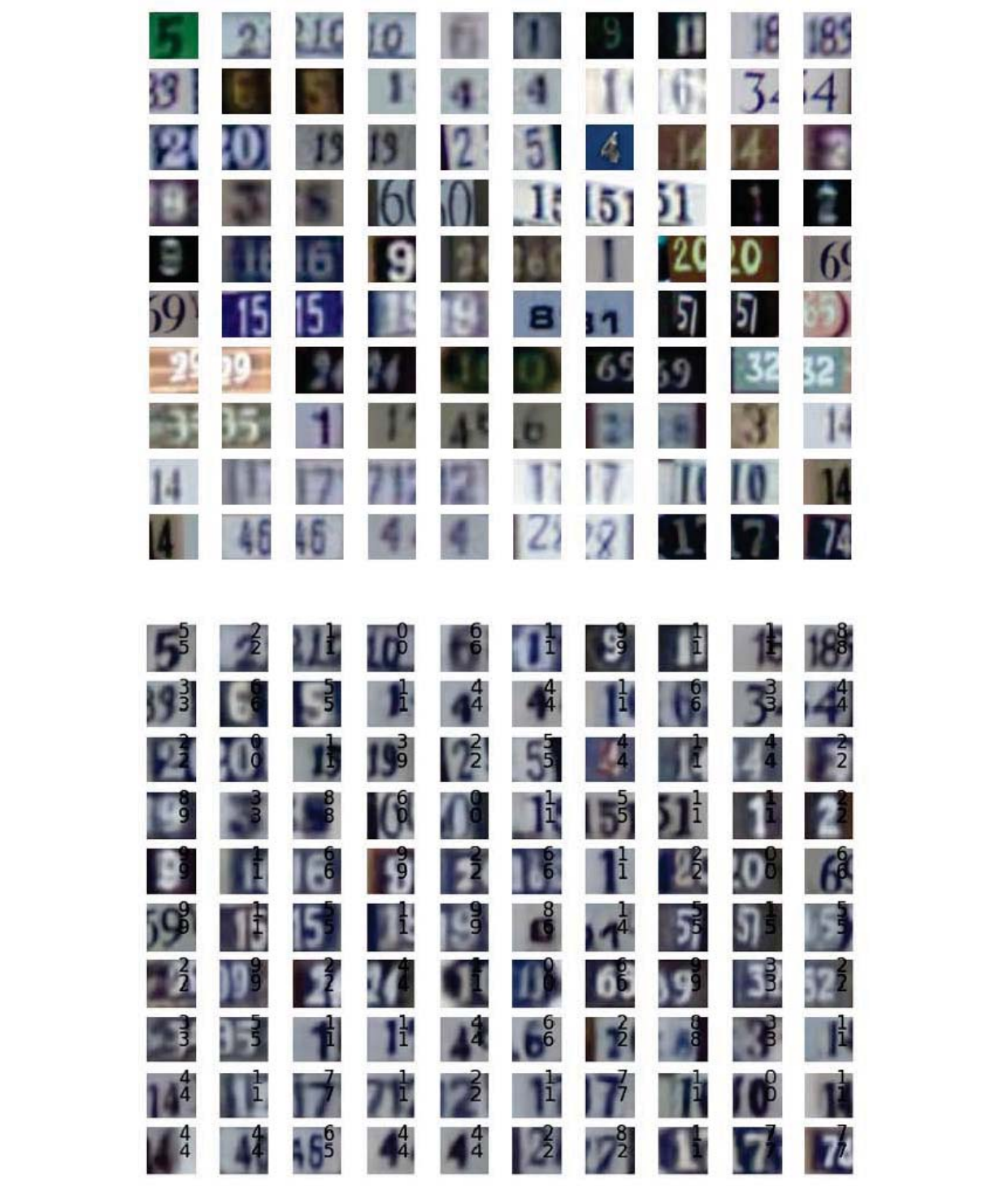}}
\caption{Reconstructions (lower) for the images in the SVHN test set (upper), with 220d Bernoulli $p(h)$. 
The two digits at the right of the lower picture are the true labels and predicted labels respectively.}
\label{fg:svhn_rec}
\end{center}
\vskip -0.2in
\end{figure}

\begin{figure}
\vskip 0.2in
\begin{center}
\centerline{\includegraphics[width=\columnwidth]{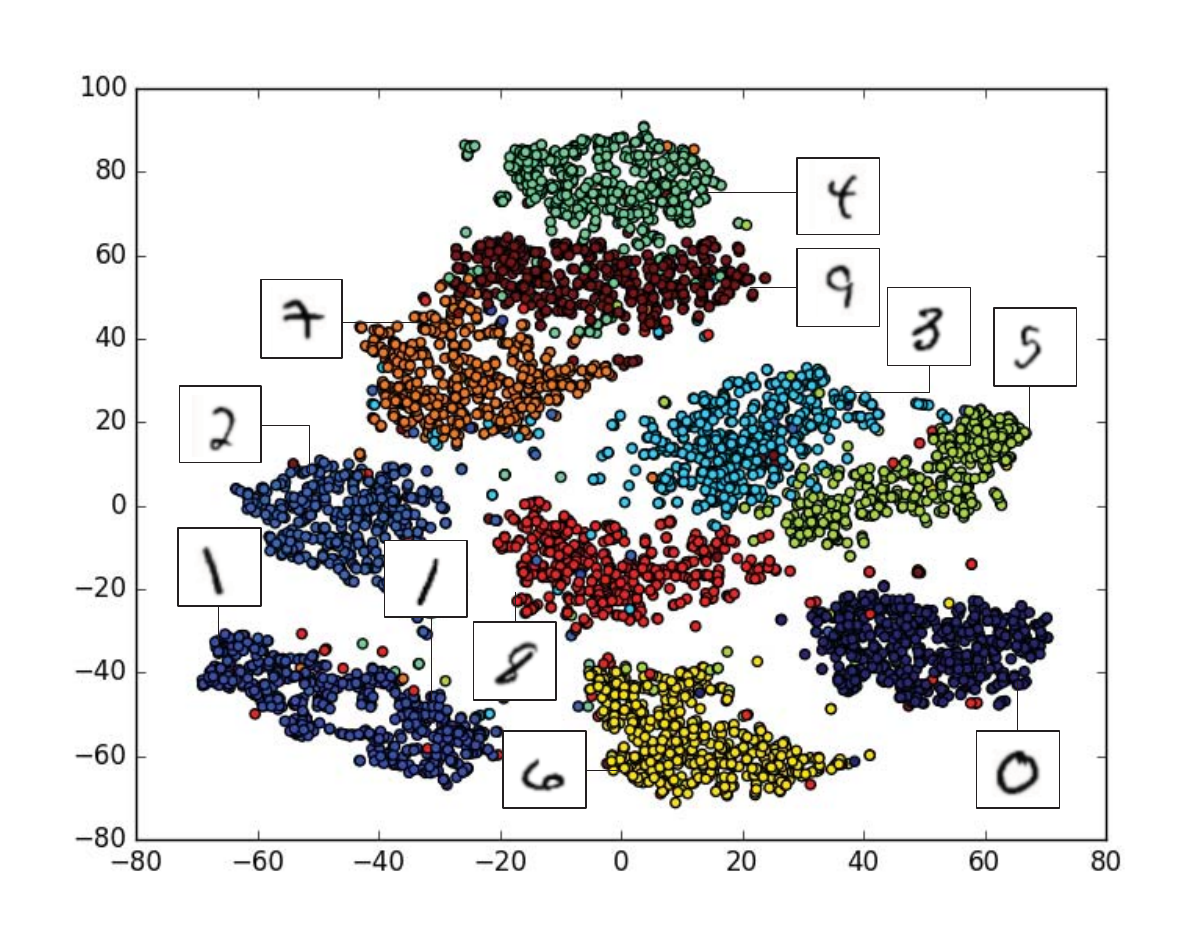}}
\caption{t-SNE of 60d latent codes inferred by unsupervised JAE with 60d Bernoulli $p(h)$. Different colors represent different class variables in MNIST dataset. 
}
\label{fg:tsne}
\end{center}
\vskip -0.2in
\end{figure}

\begin{figure}
\vskip 0.2in
\begin{center}
\centerline{\includegraphics[width=\columnwidth]{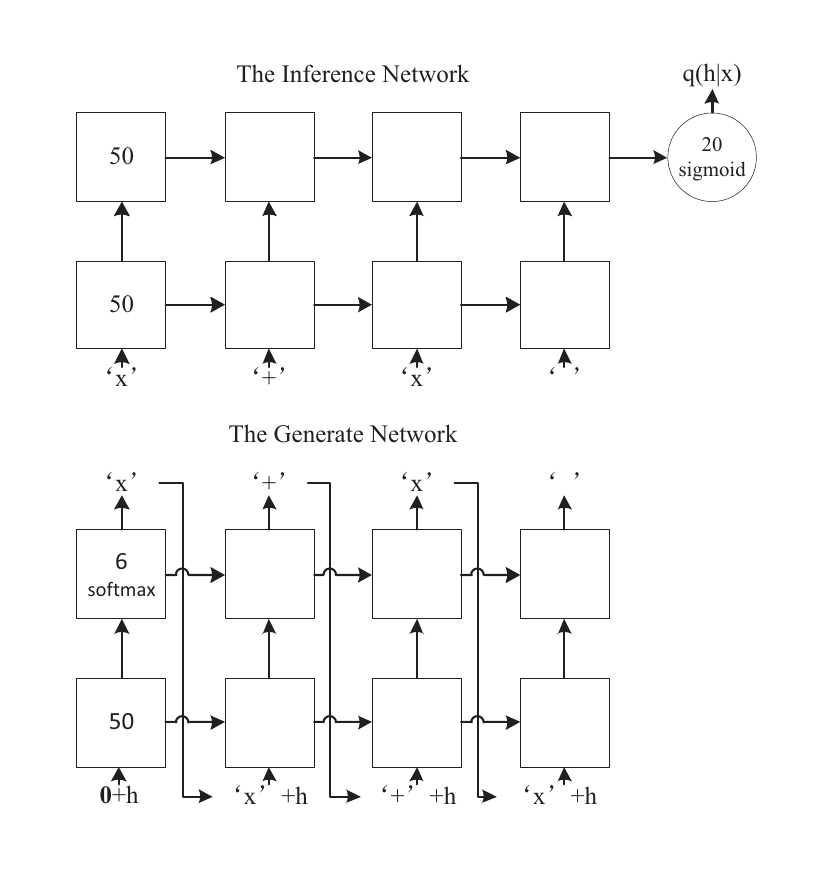}}
\caption{Architecture of the JAE model on context free grammar dataset.}
\label{fg:graph_lstm}
\end{center}
\vskip -0.2in
\end{figure}

\begin{figure}
\vskip 0.2in
\begin{center}
\centerline{\includegraphics[width=\columnwidth]{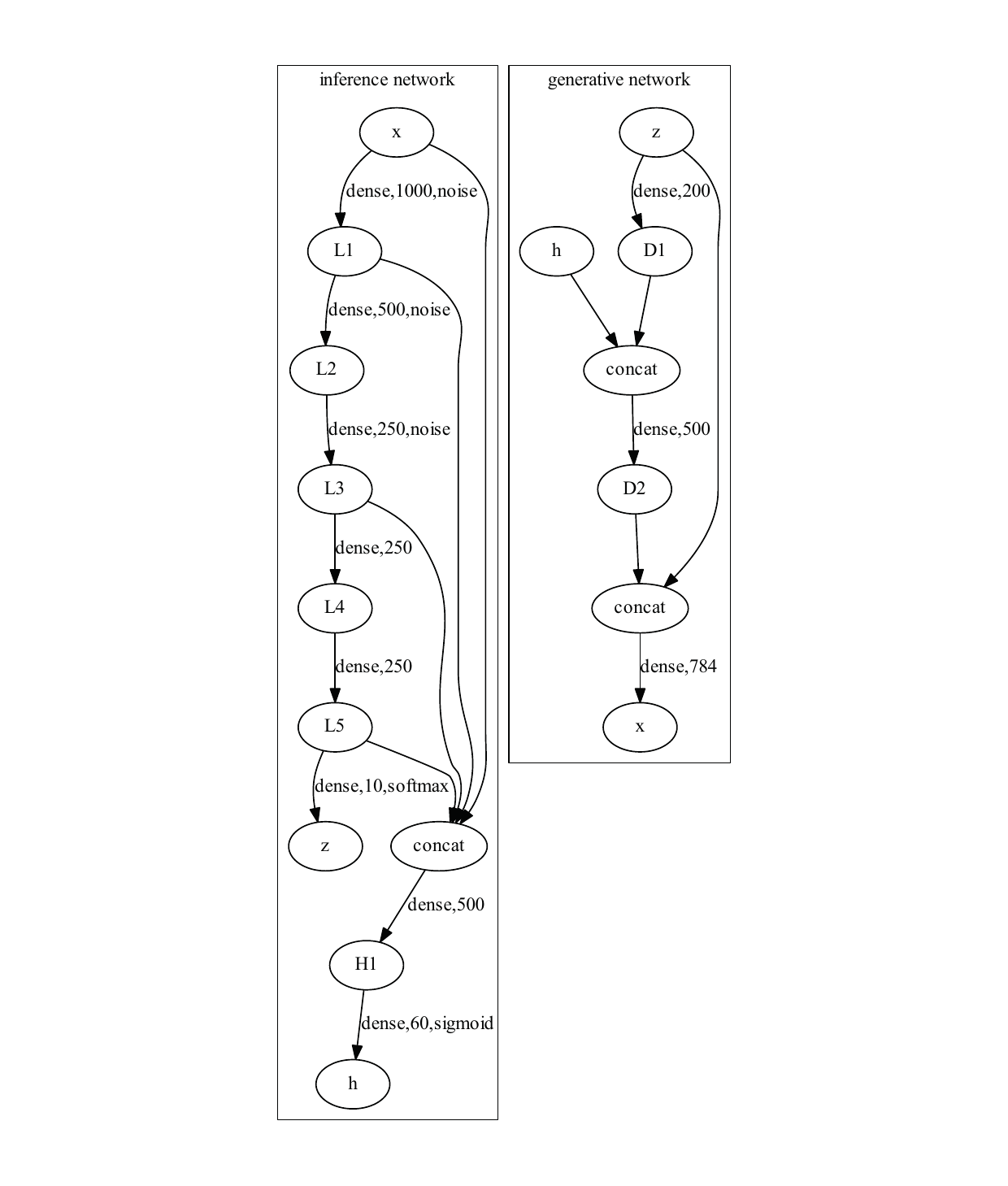}}
\caption{Architecture of the semi-JAE model on MNIST. }
\label{fg:graph_mnist}
\end{center}
\vskip -0.2in
\end{figure}

\begin{figure}
\vskip 0.2in
\begin{center}
\centerline{\includegraphics[width=\columnwidth]{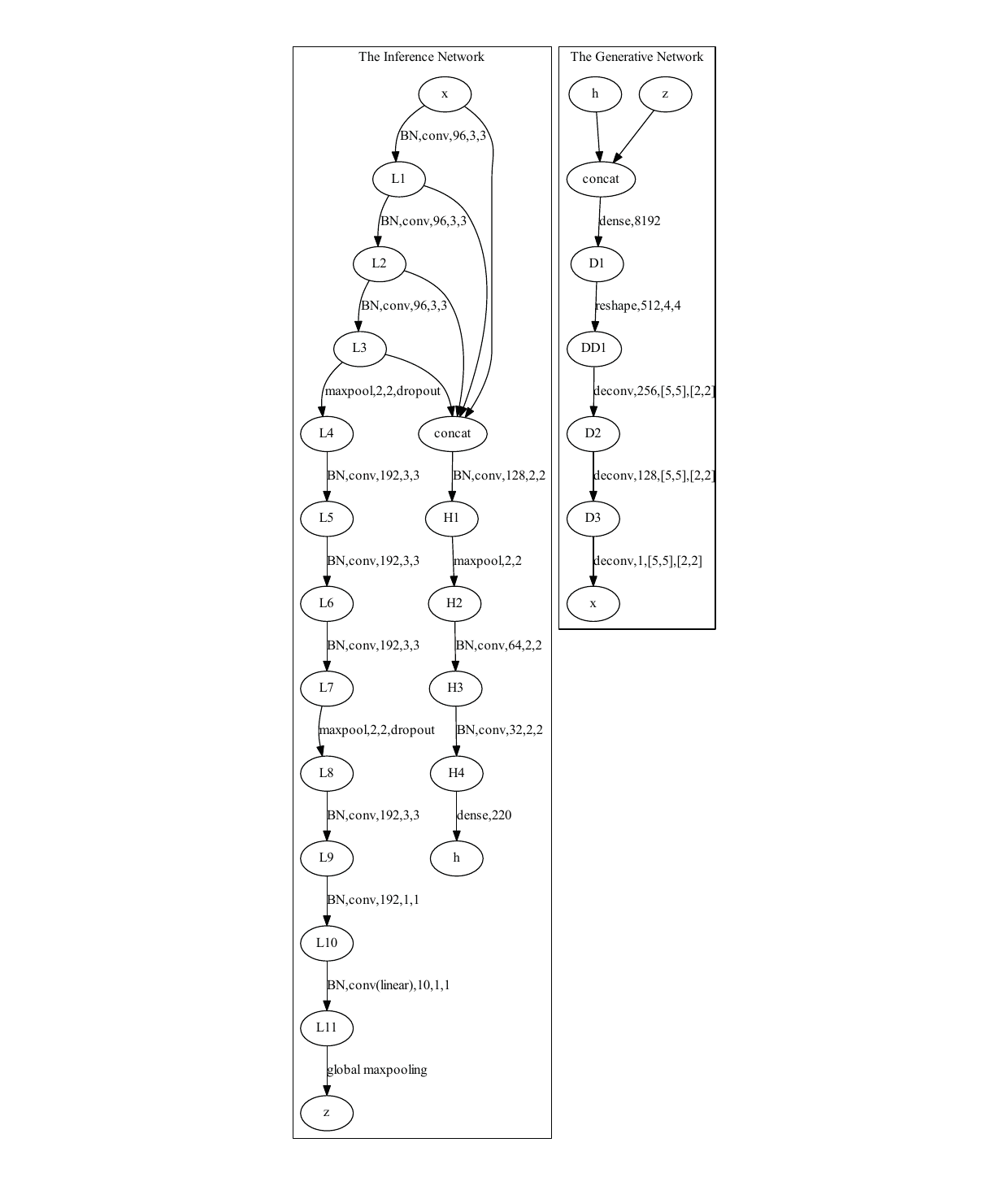}}
\caption{Architecture of the semi-JAE model on SVHN.}
\label{fg:graph_svhn}
\end{center}
\vskip -0.2in
\end{figure}

\end{document}